\DeclareMathOperator*{\arginf}{\arg\!\inf}
\theoremstyle{plain}
\newtheorem{theorem}{Theorem}[section]
\theoremstyle{definition}
\newtheorem{definition}[theorem]{Definition}
\theoremstyle{remark}
\newtheorem{remark}[theorem]{Remark}
\icmltitlerunning{Learning Relational Tabular Data without Shared Features}
\begin{document}

\twocolumn[
\icmltitle{Learning Relational Tabular Data without Shared Features}

\icmlsetsymbol{equal}{*}

\begin{icmlauthorlist}
	\icmlauthor{Zhaomin Wu}{ids}
	\icmlauthor{Shida Wang}{math}
	\icmlauthor{Ziyang Wang}{soc}
	\icmlauthor{Bingsheng He}{soc}
\end{icmlauthorlist}

\icmlaffiliation{soc}{Department of Computer Science, National University of Singapore, Singapore}
\icmlaffiliation{ids}{Institute of Data Science, National University of Singapore, Singapore}
\icmlaffiliation{math}{Department of Mathematics, National University of Singapore, Singapore}

\icmlcorrespondingauthor{Zhaomin Wu}{zhaomin@nus.edu.sg}

\icmlkeywords{Entity Alignment, Tabular Data, Transformer, Record Linkage, Entity Resolution}

\vskip 0.3in]

\printAffiliationsAndNotice{}  %

\begin{abstract}
    Learning relational tabular data has gained significant attention recently, but most studies focus on single tables, overlooking the potential of cross-table learning. Cross-table learning, especially in scenarios where tables lack shared features and pre-aligned data, offers vast opportunities but also introduces substantial challenges. The alignment space is immense, and determining accurate alignments between tables is highly complex. We propose Latent Entity Alignment Learning (\textit{Leal}), a novel framework enabling effective cross-table training without requiring shared features or pre-aligned data. Leal operates on the principle that properly aligned data yield lower loss than misaligned data, a concept embodied in its soft alignment mechanism. This mechanism is coupled with a differentiable cluster sampler module, ensuring efficient scaling to large relational tables. Furthermore, we provide a theoretical proof of the cluster sampler's approximation capacity. Extensive experiments on five real-world and five synthetic datasets show that Leal achieves up to a 26.8\% improvement in predictive performance compared to state-of-the-art methods, demonstrating its effectiveness and scalability.
\end{abstract}

\section{Introduction}\label{sec:introduction}
Tabular data is a prevalent structured data format, especially in real-world databases. Training models on such data has numerous applications across domains, including medical and financial fields. However, real-world tabular data is often highly heterogeneous, with each table containing a distinct set of features and unique data distributions. This challenge is commonly referred to as the data lake problem~\cite{nargesian2019data} or the data silo problem~\cite{patel2019bridging}. Existing machine learning approaches~\cite{gorishniy2021revisiting, gorishniy2024tabm, chen2016xgboost, prokhorenkova2018catboost} primarily focus on learning from individual tables in isolation. In practice, however, tables are often correlated, and leveraging information across tables can significantly enhance predictive performance. For instance, in the financial domain, data from a transaction table can improve predictions in a loan table by joining the tables on shared features - referred to as \textit{keys} in relational databases - and subsequently training models on the joined table.

The join-learn paradigm is effective in ideal relational databases but faces significant challenges with heterogeneous tabular data widely existed in practice. The primary limitation is the absence of shared features. For example, in an anonymous Bitcoin transaction table and a bank transaction table (Figure~\ref{fig:app-example}), no shared features exist to facilitate a join. Even when tables share features, identifying them by schema or column names is difficult, with exact match accuracy reaching only 18.4\% according to a previous study \cite{vogel2024wikidbs}. While some methods address partially \cite{kang2022fedcvt, sun2023communication} or fuzzily \cite{wu2022coupled, wu2024federated} aligned keys, they can not handle scenarios with no shared features, a scenario we refer to as \textit{latent alignment learning}. 

The second limitation pertains to efficiency, particularly in scenarios involving many-to-many key relationships. Such cases often leads to significant time cost and high memory consumption. In our study, even joining two tables with 200k and 400k rows without optimization takes approximately four hours, with a memory cost 22$\times$ higher than that of the individual tables. These challenges hinder the scalability and practical adoption of join-based learning methods for heterogeneous tabular data in real applications.

\begin{figure}[t!]
    \centering
    \includegraphics[width=0.95\linewidth]{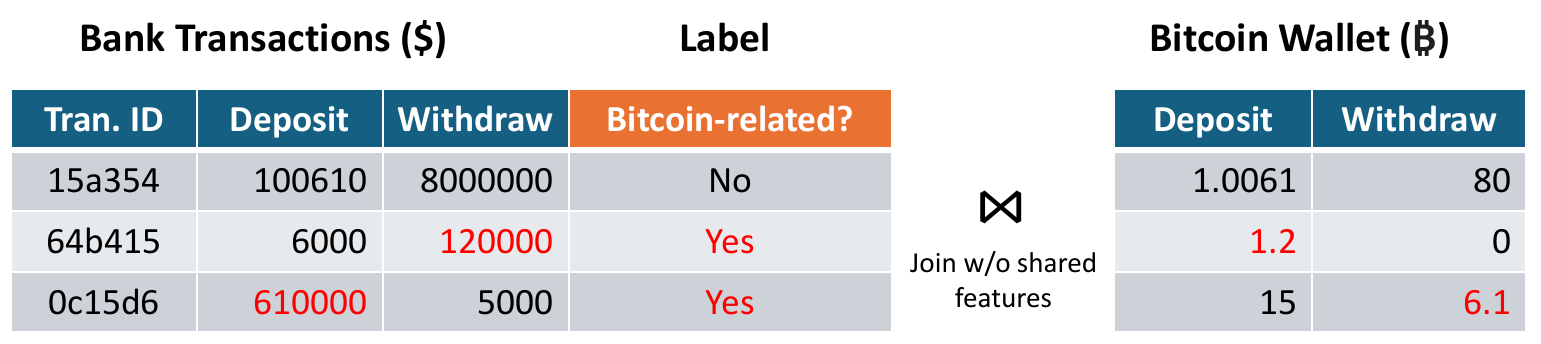}
    \caption{Example of latent alignment learning in a financial application: (left) bank transaction table and (right) Bitcoin transaction table. The tables lack shared features, and the label “Bitcoin-related” can be inferred if bank withdrawals are proportional to Bitcoin deposits (or vice versa).}
    \label{fig:app-example}
\end{figure}

A significant challenge arises to enable the training of relational tabular data without shared features. The absence of keys prevents the evaluation of match probabilities based on key similarities, as traditionally done in relational databases \cite{mishra1992join}. Therefore, identifying new criteria to estimate match probabilities between records across tables becomes crucial. Moreover, designing a model that can effectively learn based on such criteria introduces additional complexity.

To address the above challenges, we propose using training loss decay to estimate match probabilities, based on the insight that \textit{properly aligned data result in smaller loss than misaligned data}. This is formally established in Theorem~\ref{thm:alignment} and validated empirically in Figure~\ref{fig:motivation}. Matched record pairs are probabilistically identified as those yielding the greatest loss decay. Candidate records are embedded and compared via an attention mechanism to assess alignment. Experimental results show that the soft alignment mechanism achieves performance comparable to perfectly aligned data when candidate records contain the ground truth.

\begin{figure}[ht]
    \centering
    \includegraphics[width=0.9\linewidth]{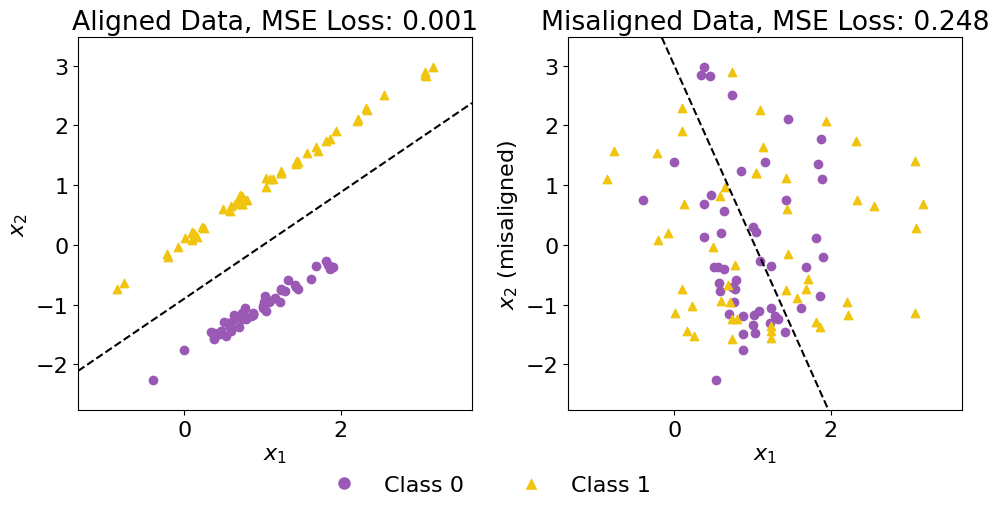}
    \caption{Relationship between data alignment and model training loss. Consider the training of a model on two relational tables: the first table contains a single feature \(x_1\) and a binary label \(y\), while the second table contains a single feature \(x_2\). The left subfigure illustrates the distribution of data points \((x_1, x_2)\) when \(x_1\) and \(x_2\) are properly aligned, while the right subfigure depicts the case of misaligned data. Both subfigures present the converged mean-squared loss and the decision boundary of linear regression.}
    \label{fig:motivation}
\end{figure}

Soft alignment, despite its advantages, remains insufficient for achieving practical latent alignment learning due to scalability challenges. As the number of candidate pairs increases, it becomes prone to overfitting and incurs significant computational costs. To overcome this challenge, we propose a novel module, the \textit{cluster sampler}, which selects a small subset of data records from the table. This module organizes data records into clusters using a soft, differentiable clustering approach and samples records from each cluster. The cluster assignments are dynamically updated during training through gradients propagated from subsequent modules. By limiting the number of candidate pairs, the \textit{cluster sampler} significantly enhances the model's scalability, enabling its effective application to larger tables.

In summary, we propose Latent Entity Alignment Learning (Leal), a coupled model that integrates alignment into supervised learning. Leal addresses the challenge of missing shared features by proposing soft alignment, which learns alignment through training loss. To further improve efficiency, Leal introduces the \textit{cluster sampler}, designed to mitigate overfitting and reduce computational costs when applied to larger tables. To the best of our knowledge, Leal is the first model to enable machine learning across relational tabular data without shared features and pre-aligned samples. The contributions of this paper can be summarized as follows:

\begin{itemize}
    \item We propose a novel approach, Leal, to enable machine learning across relational tabular data without shared features or pre-aligned samples.
    \item We provide theoretical demonstrations to the advantages of aligned data over misaligned data in terms of loss and the approximation capacity of the cluster sampler design.
    \item We evaluate Leal on five real-world and five synthetic datasets, demonstrating its effectiveness in learning relational tabular data without shared features. Our experiments show that Leal reduces the error by up to 26.8\% over state-of-the-art methods trained on individual tables with reasonable computational overhead.
\end{itemize}

\section{Related Work}\label{sec:related_work}
\paragraph{Machine Learning on Tabular Data.} Machine learning on single tables is a well-studied area with diverse paradigms. Deep tabular learning methods~\cite{gorishniy2024tabm, gorishniy2021revisiting, ye2024ptarl} utilize deep neural networks to learn tabular data representations, while tree-based approaches like XGBoost~\cite{chen2016xgboost} and CatBoost~\cite{prokhorenkova2018catboost} rely on gradient boosting decision trees. \citet{mcelfresh2024neural} observed that the performance of these methods varies across datasets. The extension of such models to multiple tables without shared features remains an unexplored area of research.

\paragraph{Multi-Modal Learning.} Multimodal learning~\cite{xu2023multimodal} focuses on integrating information from multiple data modalities, such as images and text. Most multimodal learning algorithms, including VisualBERT~\cite{li2019visualbert} and VL-BERT~\cite{su2019vl}, rely on pre-aligned pairs to supervise alignment, which is incompatible with the Leal setting. Other multimodal learning methods, such as U-VisualBERT~\cite{li2020unsupervised} and VLMixer~\cite{wang2022vlmixer}, achieve self-supervised learning using externally pretrained models for specific modalities to extract features or representations for alignment. However, these approaches are not applicable to heterogeneous tabular data, where universally effective pretrained models are unavailable. None of these multi-modal models can be directly applied to latent alignment learning between relational tabular data.

\paragraph{Vertical Federated Learning.} Vertical Federated Learning (VFL)~\cite{liu2024vertical} is a privacy-preserving learning paradigm for training models across distributed datasets with distinct feature sets. Although privacy is not a concern in the context of Leal, techniques developed to address data heterogeneity in VFL are relevant. Semi-supervised VFL~\cite{kang2022fedcvt, sun2023communication} focuses on the challenge of partial data alignment. They learns to complete missing data based on the aligned data. Fuzzy VFL~\cite{wu2022coupled, wu2024federated} handles scenarios when keys are not precise. \citet{wu2022coupled} use key similarity as the weight for aligned records. \citet{wu2024federated} encode the keys into data embedding by postional encoding~\cite{li2021learnable} and use attention mechanism to align the data. However, these methods rely on the assumption of shared features across datasets, which does not hold in latent alignment learning.

\section{Problem Formulation}\label{sec:problem}
Consider two tables, \(\mathbf{X}^P \in \mathbb{R}^{n^P \times m^P}\) and \(\mathbf{X}^S \in \mathbb{R}^{n^S \times m^S}\), where \(n^P\) and \(n^S\) represent the number of records in the primary table \(\mathbf{X}^P\) and the secondary table \(\mathbf{X}^S\), respectively, and \(m^P\) and \(m^S\) denote the number of features in these tables. We focus on a supervised learning task aimed at predicting a target variable \(\mathbf{y} \in \mathbb{R}^{n^P}\). Without loss of generality, we assume that \(\mathbf{y}\) is associated with \(\mathbf{X}^P\), referred to as the \textit{primary table}, while \(\mathbf{X}^S\) serves as the \textit{secondary table}.

We address a scenario where there are no shared features between \(\mathbf{X}^P\) and \(\mathbf{X}^S\), formally stated as \(a^P \cap a^S = \emptyset\), where \(a^P\) and \(a^S\) represent the feature sets of \(\mathbf{X}^P\) and \(\mathbf{X}^S\), respectively. Despite this, it is assumed that an approximate functional dependency \cite{mandros2017discovering} exists between the primary and secondary tables. This dependency is quantified using the information fraction (IF) metric \cite{reimherr2013quantifying}, defined as:

\[
\text{IF}(Y; X^S \mid X^P) = \frac{I(\mathbf{y}; \mathbf{X}^S \mid \mathbf{X}^P)}{H(\mathbf{y} \mid \mathbf{X}^P)}
\]

where \(I(\mathbf{y}; \mathbf{X}^S \mid \mathbf{X}^P) = H(\mathbf{y} \mid \mathbf{X}^P) - H(\mathbf{y} \mid \mathbf{X}^P, \mathbf{X}^S)\) is the conditional mutual information between \(\mathbf{y}\) and \(\mathbf{X}^S\) given \(\mathbf{X}^P\), and \(H(\mathbf{y} \mid \mathbf{X}^P)\) and \(H(\mathbf{y} \mid \mathbf{X}^P, \mathbf{X}^S)\) denote the conditional entropies of \(\mathbf{y}\) given \(\mathbf{X}^P\) and both \(\mathbf{X}^P\) and \(\mathbf{X}^S\), respectively. The IF measures the additional information provided by \(\mathbf{X}^S\) about \(\mathbf{y}\) beyond what is captured by \(\mathbf{X}^P\). Specifically, \(\text{IF}(Y; X^S \mid X^P) = 1\) implies a functional dependency, while \(\text{IF}(Y; X^S \mid X^P) = 0\) indicates statistical independence. We consider the case where \(|\text{IF}(Y; X^S \mid X^P) - 1|<\delta\), where \(\delta\) is a small positive constant, indicating a strong functional dependency between the primary and secondary tables.

The objective is to learn a predictive model by minimizing the following loss function:

\[
\min_{\theta} \frac{1}{n^P} \sum_{i=1}^{n^P} \mathcal{L}\left(f(\theta; x_i^P, \mathbf{X}^S), y_i\right)
\]

where \(f(\theta; x_i^P, \mathbf{X}^S)\) is a model parameterized by \(\theta\), \(\mathcal{L}\) is a loss function, and \(y_i\) is the target for the \(i\)-th record in \(\mathbf{X}^P\). The challenge lies in effectively utilizing the secondary table \(\mathbf{X}^S\) to enhance the predictive performance for \(\mathbf{y}\), despite the lack of common features between the two tables.

\section{Approach}\label{sec:approach}
In this section, we present the overall framework design of \textbf{L}atent \textbf{E}ntity \textbf{A}lignment \textbf{L}earning (Leal), with the model structure illustrated in Figure~\ref{fig:leal-framework}. Section~\ref{subsec:soft-alignment} introduces the soft alignment mechanism, which maps primary and secondary data records to a shared latent space. In Section~\ref{subsec:cluster-sampler}, we describe the cluster sampler, which dynamically selects the most relevant secondary records during training. Finally, Section~\ref{subsec:train-infer} outlines the training and inference processes.

\begin{figure*}[t!]
    \centering
    \includegraphics[width=0.9\linewidth]{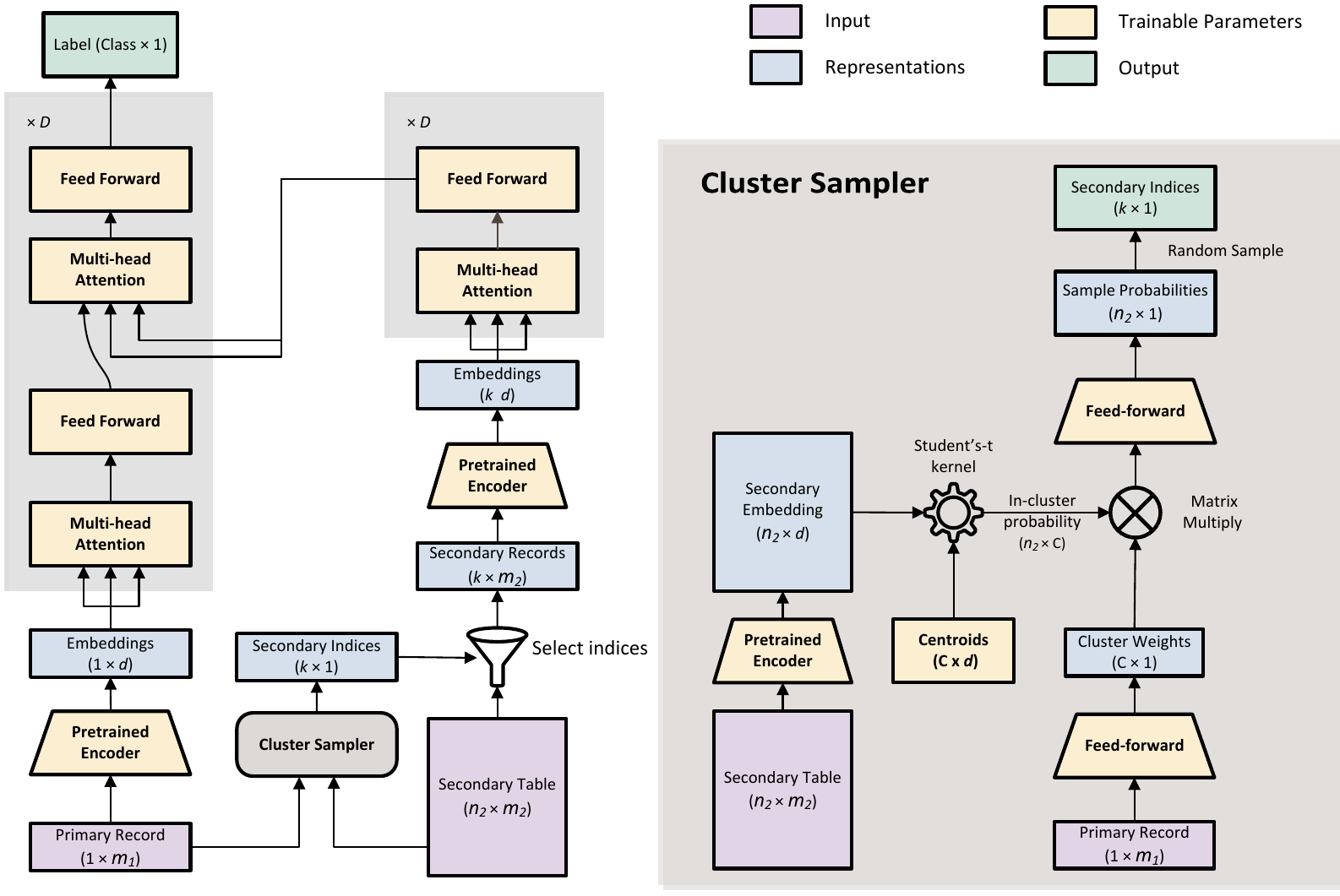}
    \caption{Overall model structure of Leal}
    \label{fig:leal-framework}
\end{figure*}

\subsection{Soft Alignment}\label{subsec:soft-alignment}

The soft alignment mechanism is motivated by the observation that \textit{properly aligned data result in smaller loss compared to misaligned data}. We provide a formal expression for linear regression in Theorem~\ref{thm:alignment}, with the proof detailed in Appendix~\ref{subsec:proof-align-misalign}.
\begin{restatable}{theorem}{thmalignment}\label{thm:alignment}
    Let $\mathbf{X}^P\in \mathbb{R}^{n \times m^P}$ and $\mathbf{X}^S\in \mathbb{R}^{n \times m^S}$ be normalized primary and secondary feature matrices, respectively, $\mathbf{y}\in \mathbb{R}^{n}$ be the target variable, and $\mathbf{R}$ be a permutation matrix. For the linear regression model $\mathbf{y} = \mathbf{X}^P \boldsymbol{\alpha} + \mathbf{R}\mathbf{X}^S \boldsymbol{\beta}$, it holds that $\mathrm{MSE}_{\mathrm{aligned}} \leq \mathrm{MSE}_{\mathrm{misaligned}}$, where $\mathrm{MSE}_{\mathrm{aligned}}$ and $\mathrm{MSE}_{\mathrm{misaligned}}$ are the mean squared errors under correct alignment and random alignment.
\end{restatable}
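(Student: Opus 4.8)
The plan is to reduce both quantities to a comparison of least-squares residuals and then exploit the fact that the aligned design matrix is the \emph{true} generative model while the misaligned one is a perturbed version. Concretely, write the aligned design matrix as $A = [\mathbf{X}^P \mid \mathbf{X}^S]$ and the misaligned one as $A_{\mathbf{R}} = [\mathbf{X}^P \mid \mathbf{R}\mathbf{X}^S]$ for a permutation matrix $\mathbf{R} \neq I$. Since $\mathrm{MSE}$ for linear regression is the minimum of $\tfrac{1}{n}\|\mathbf{y} - A\boldsymbol{\gamma}\|_2^2$ over the coefficient vector $\boldsymbol{\gamma} = (\boldsymbol{\alpha}^\top, \boldsymbol{\beta}^\top)^\top$, we have $\mathrm{MSE} = \tfrac{1}{n}\|\mathbf{y}\|_{(\mathrm{I} - P)}^2$ where $P$ is the orthogonal projector onto the column space of the relevant design matrix. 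So the inequality $\mathrm{MSE}_{\mathrm{aligned}} \le \mathrm{MSE}_{\mathrm{misaligned}}$ is equivalent to $\|\mathbf{y} - P_{\mathrm{aligned}}\mathbf{y}\|_2^2 \le \|\mathbf{y} - P_{\mathrm{misaligned}}\mathbf{y}\|_2^2$, i.e., $\mathbf{y}$ is at least as well approximated by the aligned column space as by the misaligned one.

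First I would make precise the sense in which $\mathbf{y}$ lies (approximately) in the aligned column space. The cleanest route is to interpret the theorem under the generative assumption implicit in the model equation: there exist true coefficients $\boldsymbol{\alpha}^\star, \boldsymbol{\beta}^\star$ with $\mathbf{y} = \mathbf{X}^P\boldsymbol{\alpha}^\star + \mathbf{X}^S\boldsymbol{\beta}^\star$ (the strong functional dependency $\mathrm{IF}\to 1$ from Section~\ref{sec:problem} is what justifies this, possibly up to a small residual). Then $\mathbf{y} \in \mathrm{col}(A)$, so $\mathrm{MSE}_{\mathrm{aligned}} = 0$ (or $O(\delta)$ in the approximate case), which trivially bounds $\mathrm{MSE}_{\mathrm{misaligned}} \ge 0$. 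If instead the theorem is meant to hold \emph{in expectation over a random permutation} $\mathbf{R}$ without the exact-realizability assumption, I would instead compute $\mathbb{E}_{\mathbf{R}}[\mathrm{MSE}_{\mathrm{misaligned}}]$ by noting that a uniformly random permutation destroys the $\mathbf{y}$–$\mathbf{X}^S$ dependence, so the fitted $\boldsymbol{\beta}$ contributes essentially nothing on average; formally, decompose $\mathbf{y} = \hat{\mathbf{y}}^P + \mathbf{r}^P$ into its projection onto $\mathrm{col}(\mathbf{X}^P)$ and the orthogonal residual, and show $\mathbb{E}_{\mathbf{R}}\langle \mathbf{r}^P, \text{(projection of } \mathbf{r}^P \text{ onto } \mathrm{col}(\mathbf{R}\mathbf{X}^S \text{ orthogonalized against } \mathbf{X}^P))\rangle$ is small because $\mathbf{R}\mathbf{X}^S$ is, on average, uncorrelated with $\mathbf{r}^P$ after centering (this is where normalization of the feature matrices is used).

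The key steps, in order: (i) rewrite both MSEs as residual norms of the corresponding least-squares projections; (ii) establish that $\mathbf{y}$ is in (or $\delta$-close to) $\mathrm{col}([\mathbf{X}^P\mid\mathbf{X}^S])$ using the functional-dependency hypothesis, giving $\mathrm{MSE}_{\mathrm{aligned}} \approx 0$; (iii) lower-bound $\mathrm{MSE}_{\mathrm{misaligned}}$ by the residual of projecting onto $\mathrm{col}(\mathbf{X}^P)$ alone (since adding the permuted columns can only help relative to $\mathbf{X}^P$, I actually need the reverse — that the permuted columns help \emph{less} than the correct ones), and argue that the extra variance explained by $\mathbf{R}\mathbf{X}^S$ beyond $\mathbf{X}^P$ is strictly smaller than that explained by $\mathbf{X}^S$; (iv) conclude. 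The main obstacle is step (iii): for a \emph{fixed} adversarial permutation $\mathbf{R}$ it is not literally true that $\mathbf{R}\mathbf{X}^S$ explains less of $\mathbf{y}$ than $\mathbf{X}^S$ does — one can concoct permutations that accidentally align well — so the statement must be read either as (a) $\mathrm{MSE}_{\mathrm{aligned}}=0$ under exact realizability, making the inequality automatic for \emph{every} $\mathbf{R}$, or (b) an inequality in expectation over random $\mathbf{R}$, which is the interpretation the phrase ``random alignment'' suggests. I would therefore present the clean exact-realizability argument as the main proof and add a remark handling the expected-value version, where the crux is a second-moment computation showing $\mathbb{E}_{\mathbf{R}}[\mathbf{X}^{S\top}\mathbf{R}^\top \mathbf{r}^P]$ concentrates near zero; bounding the fluctuation term there (a quadratic form in a random permutation matrix) will be the technical heart and is handled by standard permutation-variance estimates.
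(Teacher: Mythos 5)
Your diagnosis of the statement's ambiguity is accurate: the inequality cannot hold for an arbitrary fixed permutation, and the paper indeed adopts your reading (b), taking an expectation over a uniformly random $\mathbf{R}$. Your route (a) (exact realizability forcing $\mathrm{MSE}_{\mathrm{aligned}}=0$) is not what the paper does --- its proof makes no realizability assumption on $\mathbf{y}$ and does not invoke the information-fraction condition --- so presenting that as the main argument would prove a different, weaker statement. The substance of the comparison is therefore with your route (b), which you relegate to a remark.

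There are two concrete gaps in that route. First, the order of optimization and expectation matters, and you have it the wrong way around. The paper defines $\mathrm{MSE}_{\mathrm{misaligned}} = \inf_{\boldsymbol{\beta}}\inf_{\boldsymbol{\alpha}}\,\mathbb{E}_{\mathbf{R}}\|\mathbf{y}-\mathbf{X}^P\boldsymbol{\alpha}-\mathbf{R}\mathbf{X}^S\boldsymbol{\beta}\|_2^2$, i.e.\ a single coefficient vector chosen to minimize the \emph{expected} loss. Your step (i), which rewrites the misaligned MSE as the residual of the projection onto $\mathrm{col}([\mathbf{X}^P\mid\mathbf{R}\mathbf{X}^S])$ and then averages, is computing $\mathbb{E}_{\mathbf{R}}[\inf(\cdot)]$; by Jensen this is smaller than $\inf\mathbb{E}_{\mathbf{R}}[\cdot]$, and for that quantity the claimed inequality is not established (a refitted $\boldsymbol{\beta}$ can exploit accidental alignments permutation by permutation, which is exactly the failure mode you yourself flag). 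Second, no concentration or ``permutation-variance estimate'' is needed once the ordering is fixed correctly: the proof is an exact closed-form calculation. Using $\mathbb{E}[\mathbf{R}]=\tfrac{1}{n}\mathds{1}\mathds{1}^{\top}$ together with column-centering of the normalized features (so $\mathds{1}^{\top}\mathbf{X}^S=\mathbf{0}$), the cross term $\mathbb{E}_{\mathbf{R}}\bigl[\langle(\mathbf{I}-P_{X^P})\mathbf{y},\,\mathbf{R}\mathbf{X}^S\boldsymbol{\beta}\rangle\bigr]$ vanishes identically, and norm-invariance of permutations gives $\mathbb{E}_{\mathbf{R}}\|\mathbf{R}\mathbf{X}^S\boldsymbol{\beta}\|_2^2=\|\mathbf{X}^S\boldsymbol{\beta}\|_2^2$. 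Hence $\mathrm{MSE}_{\mathrm{misaligned}}=\inf_{\boldsymbol{\beta}}\|\mathbf{X}^S\boldsymbol{\beta}\|_2^2+\|(\mathbf{I}-P_{X^P})\mathbf{y}\|_2^2$, attained at $\boldsymbol{\beta}=\mathbf{0}$, which is the aligned objective restricted to $\boldsymbol{\beta}=\mathbf{0}$ and therefore at least $\mathrm{MSE}_{\mathrm{aligned}}$. Your intuition that randomization decorrelates $\mathbf{R}\mathbf{X}^S$ from the residual and that normalization is the ingredient that kills the cross term is exactly right; what is missing is the precise definition that makes this an identity rather than an approximation.
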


Theorem~\ref{thm:alignment} suggests that, for linear regression tasks with sufficient samples, properly aligned data consistently results in lower loss. While extending this theoretical result to deep learning remains an open challenge, our experimental results in Figure~\ref{fig:motivation} demonstrate that when \(n\) is sufficiently large that prevents the model from memorizing the data, even complex deep learning models struggle to achieve low loss when the data is misaligned.

To perform soft alignment between a primary data record \(x^P_i\) and \(K\) secondary data records \(\mathbf{x}^S_i = \{x^S_j\}_{j=1}^K\) with distinct features, we first map them into a shared latent space:
\begin{equation}
z^P_i = f^P(x^P_i), \quad \mathbf{z}^S = f^S(\mathbf{x}^S)
\end{equation}
In this latent space, the distance between \(z^P_i \in \mathbb{R}^{1 \times d}\) and each row vector of \(\mathbf{z}^S \in \mathbb{R}^{K \times d}\) represents the relationship between \(x^P_i\) and each record in \(\mathbf{x}^S\), where \(d\) is the dimensionality of the latent space, and \(K\) is the number of candidate records from the secondary table. The scaled inner product between \(z^P_i\) and \(\mathbf{z}^S\) is computed to measure their similarities. The similarity scores for all secondary records are normalized using a softmax function to obtain the soft alignment weights \(\boldsymbol{\lambda}_i\):
\begin{equation}
\boldsymbol{\lambda}_i = \text{SoftMax}\left(\frac{z^P_i (\mathbf{z}^S)^T}{\sqrt{d}}\right)
\end{equation}
The alignment weights \(\boldsymbol{\lambda}_i\) indicate the importance of each record pair and are applied to the latent vectors. Weighted latent vectors are then aggregated for further processing:
\begin{equation}
\tilde{z}^P_i = z^P_i + \sum_{j=1}^K \lambda_{ij} z^S_j
\end{equation}
This weighted alignment module is functionally equivalent to the widely used attention mechanism. Consequently, we directly adopt multi-head attention \cite{vaswani2017attention} in our implementation. Specifically, both primary and secondary representations are processed through a self-attention module followed by a feed-forward network. Their intermediate representations are then aggregated using an attention layer, followed by another feed-forward network. This structure can be stacked to form a deep neural network.

\subsection{Cluster Sampler}\label{subsec:cluster-sampler}

The cluster sampler is designed to efficiently select \(K\) candidate secondary records to feed into the model. For small secondary tables, such as those with hundreds of records, directly feeding the entire table into the model is feasible and demonstrates promising performance in our experiments. However, for large secondary tables, this approach becomes inefficient and may lead to overfitting.

To address this, we propose a trainable cluster sampler to dynamically sample the most relevant records during each training step. The cluster sampler selects \(K\) records based on two criteria: cluster weights and in-cluster probabilities.
The cluster weight is generated by a trainable cluster weight generator, implemented as a feed-forward network that takes the primary record \(x^P_i\) as input and outputs a \(C\)-dimensional vector, where \(C\) is the number of clusters. The cluster weight is normalized using a softmax function to produce the cluster weight vector \(\mathbf{w}_i \in \mathbb{R}^{1 \times C}\):
\begin{equation}
\mathbf{w}_i = \text{SoftMax}(\text{FeedForward}(x^P_i))
\end{equation}
The in-cluster probability is generated using a trainable soft deep K-Means module, inspired by \citet{xie2016unsupervised,ye2024ptarl}. A pretrained encoder maps the secondary table \(\mathbf{x}^S\) into a latent space, resulting in \(\mathbf{h}^S = g^S(\mathbf{x}^S) \in \mathbb{R}^{n_2 \times d}\). Cluster centroids \(\mathbf{C} \in \mathbb{R}^{C \times d}\) are initialized via K-Means clustering on \(\mathbf{h}^S\). Following \citet{xie2016unsupervised}, each record is assigned to a cluster with a probability matrix \(\mathbf{q} \in \mathbb{R}^{n_2 \times C}\), computed using the Student's t-distribution kernel:
\begin{equation}
    q_{ij} = \frac{(1 + ||h^S_j - c_i||^2/\gamma)^{-(\gamma+1)/2}}{\sum_{t=1}^C (1 + ||h^S_j - c_t||^2/\gamma)^{-(\gamma+1)/2}}
\end{equation}
where \(\gamma\) is the degree of freedom, and \(c_i\) and \(h^S_j\) represent the \(i\)-th cluster centroid and the \(j\)-th latent vector, respectively.
The final sampling probability for \(x^P_i\) is computed as the product of cluster weight and in-cluster probability, followed by a multi-layer perceptron (MLP):
\begin{equation}
    \mathbf{p}_i = \mathrm{MLP}(\mathbf{q} \mathbf{w}_i^T) \in \mathbb{R}^{n_2 \times 1} 
\end{equation}
The \(K\) secondary records are sampled based on probabilities \(\mathbf{p}_i\) during training and selected from the top \(\mathbf{p}_i\) during inference. The embeddings of these secondary records are then fed into the soft alignment module.

Furthermore, we theoretically demonstrate that the trainable weights in the cluster sampler design are capable of approximating any target sampling function in Theorem~\ref{thm:cluster-sampler}, which is formally proven in Appendix~\ref{sec:proof}.
\begin{restatable}{theorem}{thmclustersampler}\label{thm:cluster-sampler}
    For any uniform equi-continuous fixed optimal cluster sampler $h^*$ and any $\epsilon > 0$, there exists $d, C$ 
    and a corresponding weight $\theta$ for cluster sampler $h_{CS}$ such that 
    \begin{align}
        \sup_{x \in \mathbf{X}^P} |h^*(x) - h_{CS}(x; \theta)| \leq \epsilon. 
    \end{align}
\end{restatable}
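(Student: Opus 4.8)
The plan is to prove Theorem~\ref{thm:cluster-sampler} by exhibiting a choice of $d$, $C$, and weights $\theta$ that realizes $h_{CS}$ as a composition of three maps: a neural universal approximator, a fixed linear embedding, and another neural universal approximator. The key structural observation is that the primary record $x$ enters $h_{CS}$ only through the cluster-weight vector $\mathbf{w}(x)=\mathrm{SoftMax}(\mathrm{FF}(x))\in\Delta^{C-1}$: the in-cluster matrix $\mathbf{q}\in\mathbb{R}^{n_2\times C}$ depends only on the fixed secondary embeddings $h^S_j$ and the (trainable) centroids $c_i$, and the output is $h_{CS}(x;\theta)=\mathrm{MLP}\!\left(\mathbf{q}\,\mathbf{w}(x)^{\top}\right)$. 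So the pipeline is $x\mapsto\mathrm{FF}(x)\mapsto\mathbf{w}(x)\mapsto\mathbf{q}\,\mathbf{w}(x)^{\top}\mapsto\mathrm{MLP}(\cdot)$, with the third arrow a fixed linear map once $\mathbf{q}$ is fixed. I treat $\mathbf{X}^P$ as a compact subset of $\mathbb{R}^{m^P}$ (the support of the normalized data) and $h^*\colon\mathbf{X}^P\to\mathbb{R}^{n_2}$ as continuous; uniform equicontinuity of its coordinate functions then comes for free, and I assume the $\mathrm{FF}$ and $\mathrm{MLP}$ blocks are universal approximators (non-polynomial activations), which the paper's design satisfies.

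First I would fix the cluster geometry so that $\mathbf{q}$ is a linear embedding. Choosing $d\ge 1$ and an encoder $g^S$ making the latent points $h^S_1,\dots,h^S_{n_2}$ distinct, then taking $m^P+1\le C\le n_2$ and placing the centroids $c_i$ near $C$ distinct $h^S_j$ (or letting $\gamma$ be small), the rows of $\mathbf{q}$ approach distinct standard basis vectors, so $\mathbf{q}$ has rank $C$ and $\iota\colon\mathbf{w}\mapsto\mathbf{q}\,\mathbf{w}^{\top}$ is injective, carrying $\Delta^{C-1}$ homeomorphically onto a $(C-1)$-simplex $\Sigma\subset\mathbb{R}^{n_2}$. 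This is the one place where enlarging $d$ and $C$ is genuinely used, and it needs only the mild assumption $n_2\ge m^P+1$, i.e.\ exactly the ``large secondary table'' regime the sampler targets. Next, since $C-1\ge m^P$, I would fix a continuous injective $\eta\colon\mathbf{X}^P\to\operatorname{relint}\Delta^{C-1}$ (e.g.\ an affine map of a bounding box of $\mathbf{X}^P$ followed by a small inward shift), so that $\iota\circ\eta$ is a homeomorphism onto a compact set $\Gamma:=\iota(\eta(\mathbf{X}^P))\subset\Sigma$; set $\psi:=h^*\circ(\iota\circ\eta)^{-1}\colon\Gamma\to\mathbb{R}^{n_2}$, which is continuous, and extend it coordinatewise by Tietze to a continuous $\bar\psi$ on a compact neighborhood $N$ of $\Gamma$.

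Then I would chain three applications of the neural universal approximation theorem in this order. By universal approximation for the $\mathrm{MLP}$ architecture, pick $\mathrm{MLP}$ weights with $\sup_{u\in N}\|\mathrm{MLP}(u)-\bar\psi(u)\|<\epsilon/2$ and freeze them; let $L$ bound the modulus of continuity of this fixed $\mathrm{MLP}$ on $N$. Because $\eta(\mathbf{X}^P)$ is a compact subset of the open simplex, $x\mapsto\log\eta(x)$ is continuous into $\mathbb{R}^C$, so pick $\mathrm{FF}$ weights making $\sup_{x}\|\mathrm{SoftMax}(\mathrm{FF}(x))-\eta(x)\|$ small enough; since $\iota$ is Lipschitz with constant $\|\mathbf{q}\|$ and $\mathrm{SoftMax}$ is Lipschitz, this yields $\sup_x\|\iota(\mathrm{SoftMax}(\mathrm{FF}(x)))-(\iota\circ\eta)(x)\|<\min\{\operatorname{dist}(\Gamma,\partial N),\ \epsilon/(2L)\}$, so the $\mathrm{MLP}$'s argument stays inside $N$. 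Finally, for every $x\in\mathbf{X}^P$ the triangle inequality gives $\|h_{CS}(x;\theta)-h^*(x)\|\le L\cdot\frac{\epsilon}{2L}+\frac{\epsilon}{2}+\|\bar\psi((\iota\circ\eta)(x))-h^*(x)\|$, and the last term is $0$ because $(\iota\circ\eta)(x)\in\Gamma$ where $\bar\psi=h^*\circ(\iota\circ\eta)^{-1}$; hence the sup over $x$ is $<\epsilon$, as required.

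The step I expect to be the main obstacle is the first one: arranging that the structured matrix $\mathbf{q}$ — whose rows are forced into the probability simplex and whose entries are Student-$t$ kernel values — acts as a \emph{linear embedding}, so that no information about $x$ is lost between the softmax and the $\mathrm{MLP}$. This is what pins down the required sizes of $d$ and $C$ and what actually uses the freedom in the centroids; the alternative, working with a non-injective $\mathbf{q}$ via a continuous section of the affine projection onto its column hull, is more delicate and I would avoid it. A smaller but real subtlety, handled by routing $\eta$ through the relative interior of the simplex with margin, is that $\mathrm{SoftMax}$ never reaches the boundary; and the ordering of the three approximations (freeze the $\mathrm{MLP}$ first, then choose $\mathrm{FF}$ fine enough) is what keeps the error propagation through the composition under control.
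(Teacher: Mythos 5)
Your proposal correctly identifies the pipeline $x \mapsto \mathbf{w}(x)=\mathrm{SoftMax}(\mathrm{FF}(x)) \mapsto \mathbf{q}\,\mathbf{w}(x)^{\top} \mapsto \mathrm{MLP}(\cdot)$ and the error-propagation ordering (freeze the outer network first, then refine the inner one) is sound. However, there is a genuine gap at the step you yourself flag as central: you treat the final $\mathrm{MLP}$ as a universal approximator from a neighborhood $N\subset\mathbb{R}^{n_2}$ to $\mathbb{R}^{n_2}$, approximating the decoder $\bar\psi=h^{*}\circ(\iota\circ\eta)^{-1}$. In the cluster sampler this $\mathrm{MLP}$ is applied \emph{entrywise} to the vector $\mathbf{q}\,\mathbf{w}^{\top}$ with weights shared across the $n_2$ secondary records --- the paper's own proof makes this explicit when it states that no weight depends on the dimension $n_2$ and reduces the claim to a per-record statement mapping to a $1\times 1$ output. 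Under that architecture the $j$-th output is $\mathrm{MLP}\bigl(\langle q_j,\mathbf{w}(x)\rangle\bigr)$, i.e.\ it sees $x$ only through the single scalar $\langle q_j,\mathbf{w}(x)\rangle$. Your construction makes matters worse: driving the rows of $\mathbf{q}$ toward standard basis vectors makes this scalar one coordinate of the softmax output. Since for $m^{P}\geq 2$ there is no continuous injection of a compact $\mathbf{X}^{P}\subset\mathbb{R}^{m^{P}}$ into $\mathbb{R}$, the ``embed injectively, then decode with the outer network'' strategy cannot recover an arbitrary $h^{*}_j(x)$ from that scalar, so the triangle-inequality step collapses.

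The paper avoids this by placing the approximation burden on the bilinear form itself rather than on the $\mathrm{MLP}$: writing the $j$-th coordinate of the target as a continuous function $h(p,s_j)$ of the primary record and the $j$-th secondary record, it invokes a separable (tensor-product) expansion $h(p,s)\approx\sum_{i=1}^{C}\mathrm{pol}_{1,i}(p)\,\mathrm{pol}_{2,i}(s)$, approximates each factor by the trainable components $f_i(p;\theta_1)$ and $g_i(s;\theta_2)$, and lets $C$ grow with $1/\epsilon$; the scalar $\langle q_j,\mathbf{w}(x)\rangle$ then already approximates $h^{*}_j(x)$ before the $\mathrm{MLP}$ is applied. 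To repair your argument you would need to either adopt this separable-expansion route or restrict to the (unstated) case where the outer $\mathrm{MLP}$ genuinely acts on the full $n_2$-vector. Note also that both your sketch and the paper's proof gloss over the constraint that $\mathbf{w}$ and the rows of $\mathbf{q}$ live in the probability simplex, which limits what the factors $f_i$ and $g_i$ can realize; your routing of $\eta$ through the relative interior handles this on the primary side, which is a point the paper does not address.
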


\subsection{Training and Inference}\label{subsec:train-infer}

The training process comprises two stages: an unsupervised stage and a supervised stage. The unsupervised stage focuses on learning an initial representation to facilitate cluster initialization, while the supervised stage is designed to train the model to predict the primary target.

\paragraph{Training.} 
In the unsupervised stage, the pretrained encoder is trained using an autoencoder \cite{zhai2018autoencoder}, where both the encoder \(g^S\) and decoder \(\phi^S\) are simple multi-layer perceptrons (MLPs) with LayerNorm \cite{ba2016layer}:
\begin{equation}
\min_{g^S,\phi^S} \|\mathbf{X}^S - \phi^S(g^S(\mathbf{X}^S))\|_2^2
\end{equation}
In the supervised stage, during each epoch, each batch of data records from the primary table, referred to as the \textit{primary records}, are fed into the cluster sampler alongside the entire secondary table. The sampling probabilities are then calculated (lines 6-8), and the indices of the selected records from the secondary table, referred to as \textit{secondary records}, are output (line 9). The primary record and secondary records are subsequently passed into an attention-based soft alignment module (lines 10-11) for further training to derive the final label (line 12) and loss (line 13), as illustrated in Figure~\ref{fig:leal-framework}. The pretrained encoder, cluster centroids, feed-forward layers, and attention layers are optimized jointly (line 14). The detailed training process is presented in Algorithm \ref{alg:training}.

\begin{algorithm}[t]
\caption{Training Process of Leal}
\label{alg:training}

\begin{algorithmic}[1]
\REQUIRE Primary table \(\mathbf{X}^P\), secondary table \(\mathbf{X}^S\), labels \(\mathbf{y}\), batch size \(B\), number of clusters \(C\), number of candidate records \(K\), learning rate \(\eta\).
\OUTPUT Trained model parameters \(\theta\)
\STATE Initialize model parameters \(\theta\) randomly
\STATE Train encoder-decoder \((g^S, \phi^S)\) by minimizing \(\|\mathbf{X}^S - \phi^S(g^S(\mathbf{X}^S))\|_2^2\)
\STATE Initialize cluster centroids \(\mathbf{C}\) via K-means on \(g^S(\mathbf{X}^S)\)
\FOR{each epoch}
    \FOR{each batch \(\{x^P_b, y_b\}_{b=1}^B\) from \((\mathbf{X}^P, \mathbf{y})\)}
        \STATE Calculate cluster weights: \(\mathbf{w}_b \leftarrow \text{SoftMax}(\text{FeedForward}(x^P_b))\)
        \STATE Calculate in-cluster probabilities \(\mathbf{q}\) using Student's t-distribution
        \STATE Obtain sampling probabilities \(\mathbf{p}_b \leftarrow \mathrm{MLP}(\mathbf{q} \mathbf{w}_i^T)\)
        \STATE Sample \(K\) secondary records \(\mathbf{x}^S_b\) according to \(\mathbf{p}_b\)
        \STATE Encoding: \(z^P_b \leftarrow f^P(x^P_b)\), \(\mathbf{z}^S_b \leftarrow f^S(\mathbf{x}^S_b)\)
        \STATE Compute attention weights \(\boldsymbol{\lambda}_b\) and embedding \(\tilde{z}^P_b\)
        \STATE Predict labels: \(\hat{y}_b \leftarrow f(\theta; \tilde{z}^P_b)\)
        \STATE Calculate loss: \(\mathcal{L}_b \leftarrow \mathcal{L}(\hat{y}_b, y_b)\)
        \STATE Update parameters \(\theta \leftarrow \theta - \eta \nabla_{\theta} \mathcal{L}_b\)
    \ENDFOR
\ENDFOR
\end{algorithmic}
\end{algorithm}

\paragraph{Inference.} During inference, the pretrained encoder, cluster centroids, and feed-forward layer remain fixed. The procedure follows the forward pass of the supervised stage, with the only variation being in the cluster sampler. To ensure determinism, the top records with the highest sampling probabilities $\mathbf{p}_i$ are selected.

\paragraph{Scalability.} Suppose that training an individual table requires $M_0$ memory and $T_0$ time. Let $K$ represent the number of candidate secondary records. The current memory overhead of Leal is $K M_0$, as for each primary record, $K$ neighboring records are fed into the training process. The time complexity of Leal is $O(K^2 T_0)$, which arises from the quadratic complexity of the attention mechanism. However, with the cluster sampler, $K$ is typically less than 100, making the training time and memory overhead of Leal manageable.

\section{Experiments}\label{sec:experiments}
\subsection{Experimental Setup}

We conduct experiments on five real-world and five synthetic datasets, covering classification and regression tasks. Each real-world dataset comprises two tables from different sources, with shared features removed. The \texttt{house} dataset combines data from Lianjia \cite{qiu2017} and Airbnb \cite{airbnb2019} for house price prediction. The \texttt{bike} dataset uses Citibike \cite{citibike2016} and New York City Taxi routes \cite{nytlc2016} for travel time prediction. The \texttt{hdb} dataset integrates HDB resale prices \cite{hdb2018} and school rankings \cite{salary2020} in Singapore for resale price prediction. The \texttt{accidents} dataset, derived from Slovenian police traffic records \cite{slovenian_police_traffic_safety}, consists of individual and accident-level tables for accident type classification. The \texttt{hepatitis} dataset, sourced from the PKDD'02 Discovery Challenge database \cite{berka2002hepatitis}, consists of medical check tables for distinguishing Hepatitis B and C cases. Any shared features, if present, are removed from secondary tables. Dataset statistics are summarized in Table~\ref{tab:real_world_dataset_stats}, where ``cls'' means classification and ``reg'' means regression.

The synthetic datasets are generated from the UCI repository and include \texttt{breast} \cite{zwitter1988breast}, \texttt{covertype} \cite{blackard1998covertype}, \texttt{gisette} \cite{guyon2004gisette}, \texttt{letter} \cite{slate1991letter}, and \texttt{superconduct} \cite{hamidieh2018superconductivity}. Each synthetic dataset is randomly divided by features into two tables, serving as primary and secondary tables, without any overlapped feature. Detailed statistics of the synthetic datasets are provided in Table~\ref{tab:synthetic_dataset_stats}.

We apply one-hot encoding to categorical features, and normalize all features of real-world datasets to mitigate the impact of extreme values. Therefore, all features are treated as numerical in our experiments.

\begin{table}[ht]
  \centering
  \caption{Details of real-world datasets}
  \vskip 0.15in
  \small
  \scalebox{0.82}{
  \begin{tabular}{lcccccc}
  \toprule
  \multirow{2}{*}{\textbf{Dataset}} & \multicolumn{2}{c}{\textbf{Primary}} & \multicolumn{2}{c}{\textbf{Secondary}} & \multirow{2}{*}{\textbf{\#Class}} & \multirow{2}{*}{\textbf{Task}} \\
  \cmidrule(lr){2-3} \cmidrule(lr){4-5}
  & \textbf{\#Inst.} & \textbf{\#Feat.} & \textbf{\#Inst.} & \textbf{\#Feat.} & & \\
  \midrule
accidents & 237337 & 21 & 419818 & 16 & 6 & cls \\
bike & 200000 & 966 & 100000 & 4 & 1 & reg \\
hdb & 92065 & 70 & 165 & 9 & 1 & reg \\
hepatitis & 621 & 6 & 5691 & 10 & 2 & cls \\
house & 141049 & 54 & 27827 & 23 & 1 & reg \\
  \bottomrule
  \end{tabular}}
  \label{tab:real_world_dataset_stats}
\vspace{-5pt}
\end{table}

\begin{table}[ht]
  \centering
  \caption{Details of synthetic datasets}
  \vskip 0.15in
  \small
  \begin{tabular}{lcccc}
  \toprule
  \textbf{Dataset} & \textbf{\#Inst.} & \textbf{\#Feat.} & \textbf{Classes} & \textbf{Task} \\
  \midrule
breast & 286 & 43 & 2 & cls \\
covertype & 581012 & 54 & 7 & cls \\
gisette & 6000 & 5000 & 2 & cls \\
letter & 20000 & 15 & 26 & cls \\
superconduct & 21263 & 81 & 1 & reg \\
  \bottomrule
  \end{tabular}
  \label{tab:synthetic_dataset_stats}
\end{table}

\paragraph{Training.} All models are trained using the AdamW~\cite{loshchilov2017decoupled} optimizer with early stopping and a learning rate of 0.001. A batch size of 128 is employed, and training is conducted for up to 150 epochs, with early stopping determined by validation loss. The hyperparameters \(K\) and \(C\) are selected from the set \(\{1, 5, 10, 20, 100\}\), while the layer depth is chosen from \(\{1, 3, 6\}\). In the unsupervised stage, the depth of the encoder and decoder is set to 2, except for the \texttt{hdb} dataset, where a depth of 1 is used to prevent overfitting. All embedding dimensions are fixed at 100.

\paragraph{Evaluation.} For all datasets, the primary table is divided into training, validation, and test sets in a 7:1:2 ratio. The secondary table is utilized during both training and inference. Evaluation metrics include accuracy for classification tasks and root mean squared error (RMSE) for regression tasks. The mean and standard deviation of test scores are reported over five seeds.

\paragraph{Baselines.} We compare the proposed method with state-of-the-art deep tabular learning methods. "Solo" denotes training on the primary table only. The baselines are:
\begin{itemize}
    \item \textbf{Solo-MLP:} A three-layer MLP with hidden sizes of (800, 400, 400) and ReLU activation.
    \item \textbf{Solo-ResNet~\cite{gorishniy2021revisiting}:} An MLP model incorporating skip connection, ReLU activation and batch normalization.
    \item \textbf{Solo-FTTrans~\cite{gorishniy2021revisiting}:} A transformer-based model that embeds each feature as a token.
    \item \textbf{Solo-TabM~\cite{gorishniy2024tabm}:} A state-of-the-art multi-layer deep ensemble-based model.
\end{itemize}

\paragraph{Environement.} Each task is executed on a single NVIDIA V100 GPU with 32GB memory and an Intel(R) Xeon(R) Platinum 8168 CPU @ 2.70GHz. The system is equipped with 1.48TB of memory, which is not fully utilized. The code is implemented using PyTorch 2.5 and Python 3.10.

\subsection{Performance}

\begin{table*}[ht]
  \centering
  \caption{Performance of Leal and baselines on real-world datasets}\label{tab:real-perf}
  \vskip 0.15in
  \small
  \begin{tabular}{lcccccc}
  \toprule
  
  \multirow{2}{*}{\textbf{Methods}} & \multicolumn{5}{c}{\textbf{Real-world Dataset}} \\
  \cmidrule{2-6}
   & \textbf{accidents} (↑) & \textbf{bike} (↓) & \textbf{hdb} (↓) & \textbf{hepatitis} (↑) & \textbf{house} (↓)\\
  \midrule
  
Solo-MLP & 73.65\% \textpm 0.08\% & 541.4278 \textpm 1.5190 & 298.3065 \textpm 3.2910 & 65.74\% \textpm 11.74\% & 215.8363 \textpm 0.4579 \\
Solo-ResNet & \underline{73.89\%} \textpm 0.08\% & \textbf{242.7551} \textpm 0.9368 & 35.2518 \textpm 0.1684 & \underline{76.80\%} \textpm 4.02\% & 72.2510 \textpm 0.3008 \\
Solo-TabM & 73.69\% \textpm 0.05\% & 253.4289 \textpm 0.6411 & \underline{35.2030} \textpm 0.1276 & 74.40\% \textpm 3.39\% & 75.5523 \textpm 0.4723 \\
Solo-FTTrans & 73.47\% \textpm 0.10\% & 301.9098 \textpm 26.4545 & 35.4862 \textpm 0.2513 & 74.24\% \textpm 4.39\% & \underline{70.6538} \textpm 0.6250 \\
\midrule
Leal & \textbf{73.92\%} \textpm 0.16\% & \underline{248.0001} \textpm 1.5935 & \textbf{34.8088} \textpm 0.3468 & \textbf{79.03\%} \textpm 2.39\% & \textbf{51.7545} \textpm 0.2352 \\
  \bottomrule
\end{tabular}
\vspace{-5pt}
\end{table*}

\begin{table*}[ht]
  \centering
  \caption{Performance of Leal and baselines on synthetic datasets (OOM: Out of memory)}\label{tab:syn-perf}
  \vskip 0.15in
  \small
  \begin{tabular}{lcccccccccccc}
  \toprule
  \multirow{2}{*}{\textbf{Methods}} & \multicolumn{5}{c}{\textbf{Synthetic Dataset}} \\
  \cmidrule{2-6}
   & \textbf{breast} (↑) & \textbf{covertype} (↑) & \textbf{gisette} (↑) & \textbf{letter} (↑) & \textbf{superconduct} (↓)\\
  \midrule

Solo-MLP & 86.55\% \textpm 3.34\% & 58.60\% \textpm 13.30\% & 96.52\% \textpm 0.83\% & 44.88\% \textpm 5.01\% & 11.5641 \textpm 5.4981 \\
Solo-ResNet & \underline{89.65\%} \textpm 4.50\% & 73.75\% \textpm 0.12\% & \underline{97.30\%} \textpm 0.39\% & \textbf{89.55\%} \textpm 0.50\% & 0.1563 \textpm 0.1459 \\
Solo-TabM & 86.55\% \textpm 3.34\% & \underline{73.81\%} \textpm 0.19\% & 96.73\% \textpm 0.44\% & 83.97\% \textpm 0.49\% & 0.3011 \textpm 0.0739 \\
Solo-FTTrans & 87.59\% \textpm 3.68\% & 60.86\% \textpm 1.83\% & OOM & \underline{89.48\%} \textpm 0.46\% & \underline{0.1534} \textpm 0.1460 \\
\midrule
Leal & \textbf{93.11\%} \textpm 3.08\% & \textbf{76.46\%} \textpm 4.68\% & \textbf{97.57\%} \textpm 0.44\% & 87.57\% \textpm 1.28\% & \textbf{0.1468} \textpm 0.1505 \\
  \bottomrule
\end{tabular}%
\vspace{-5pt}
\end{table*}

The performance of Leal and the baseline methods across various datasets is summarized in Table~\ref{tab:real-perf} for real-world datasets and Table~\ref{tab:syn-perf} for synthetic datasets. Two key findings emerge from these results. First, Leal outperforms state-of-the-art approaches trained solely on the primary table for the majority of datasets. For instance, on the \texttt{house} dataset, Leal reduces RMSE by at least 26.8\% compared to all baselines. Second, Leal demonstrates significantly better performance on real-world datasets compared to synthetic datasets. This is attributed to the presence of fuzzy and many-to-many alignments in real-world datasets, which amplify the effectiveness of soft alignment by better capturing complex relationships and dependencies between data records, as also observed in \cite{wu2024federated}. In contrast, synthetic datasets typically involve only one-to-one alignments, limiting the benefits of soft alignment. Notably, FT-Transformer (FTTrans) fails to handle the high-dimensional \texttt{gisette} dataset due to its encoding approach, which treats each feature as a separate token. This results in an out-of-memory error on all tested GPUs.

\subsection{Efficiency}\label{subsec:efficiency}

Table~\ref{tab:train-time} presents the average training time per epoch for Leal and baseline methods, with $K=20$ and $C=20$, a typical hyperparameter setting for Leal in our experiments. Despite addressing the complex alignment challenges inherent in its design, Leal maintains competitive training efficiency compared to baselines trained on a single table. On average, Leal incurs a computational overhead ranging from 1.27\(\times\) to 22.82\(\times\), even with the quadratic growth in the number of record pairs introduced by incorporating the secondary table. This efficiency is largely attributed to Leal’s cluster sampler module, which significantly reduces the number of candidate pairs in the soft alignment process, thereby mitigating the computational burden. 

The overhead of Leal on large datasets is reasonable. For instance, on the \texttt{covertype} dataset with 581K instances in both tables, the alignment-training process for Leal requires approximately 692 minutes ($415 \times 100 / 60$ min). For comparison, adopting state-of-the-art approximate nearest neighbor search methods \textit{without learning}, such as FAISS-IVF \cite{douze2024faiss}, with a recall $\leq 10^{-2}$, would take around 312 minutes for a the same table scale according to a recent benchmark \cite{aumuller2020ann}. This highlights that while Leal introduces additional computational costs, its efficiency remains reasonable for large-scale datasets.

\begin{table*}[t!]
    \centering
    \caption{Average training time per epoch in seconds (OOM: out of memory)}\label{tab:train-time}
    \small
    \vskip 0.15in
    \begin{tabular}{lcccccccccc}
    \toprule
    \multirow{2}{*}{\textbf{Method}} & \multicolumn{10}{c}{\textbf{Dataset}} \\
    \cmidrule{2-11}
    & accidents & bike & breast & covertype & gisette & hdb & hepatitis & house & letter & superconduct\\
    \midrule
    Solo-MLP & 3.41 & 9.12 & 1.00 & 18.21 & 1.26 & 4.88 & 1.00 & 7.23 & 2.14 & 2.22 \\
    Solo-ResNet & 16.59 & 18.59 & 1.00 & 46.39 & 1.39 & 9.08 & 1.00 & 14.59 & 3.03 & 3.18 \\
    Solo-TabM & 13.09 & 14.76 & 1.00 & 29.06 & 1.22 & 7.83 & 1.00 & 10.78 & 2.51 & 2.62 \\
    Solo-FTT & 11.80 & 244.07 & 1.00 & 28.83 & OOM & 20.15 & 1.00 & 15.95 & 1.36 & 1.40 \\
    \midrule
    Leal & 69.64 & 50.04 & 1.03 & 415.43 & 1.85 & 20.17 & 1.27 & 34.85 & 8.52 & 3.62 \\
    Mean Overhead & 20.40$\times$ & 5.49$\times$ & 1.03$\times$ & 22.82$\times$ & 1.51$\times$ & 4.13$\times$ & 1.27$\times$ & 4.82$\times$ & 6.27$\times$ & 2.59$\times$ \\
    \bottomrule
    \end{tabular}
\end{table*}

\subsection{Ablation Studies}

\paragraph{Effect of Soft Alignment.}
We evaluate the effectiveness of soft alignment through a controlled experiment. To eliminate the influence of sampling techniques, we select $K$ candidates, including one ground-truth correctly aligned secondary record and $K-1$ randomly sampled secondary records. This setup allows us to assess the attention mechanism's ability to identify the correct secondary record. Each experiment is conducted under five seeds, and we report the mean and standard deviation of the accuracy in Figure~\ref{fig:lealtop}.

Two key findings emerge from Figure~\ref{fig:lealtop}. First, soft alignment effectively identifies the correct alignment at a modest $K$ value, despite the noise introduced by random sampling. For example, to outperform Solo-ResNet, the \texttt{covertype} dataset supports $K \leq 320$, while the \texttt{letter} dataset supports $K \leq 10$. Second, a comparison of Figure~\ref{fig:lealtop} and Table~\ref{tab:syn-perf} reveals that the performance of Leal in Figure~\ref{fig:lealtop} with small $K$ significantly exceeds its performance in Table~\ref{tab:syn-perf}. This indicates that the primary bottleneck for Leal lies in the clustering process. Enhancing the clustering process to improve the probability of including the correct pair in the candidate set would further boost Leal’s performance.

\begin{figure}[ht]
    \centering
    \includegraphics[width=0.49\linewidth]{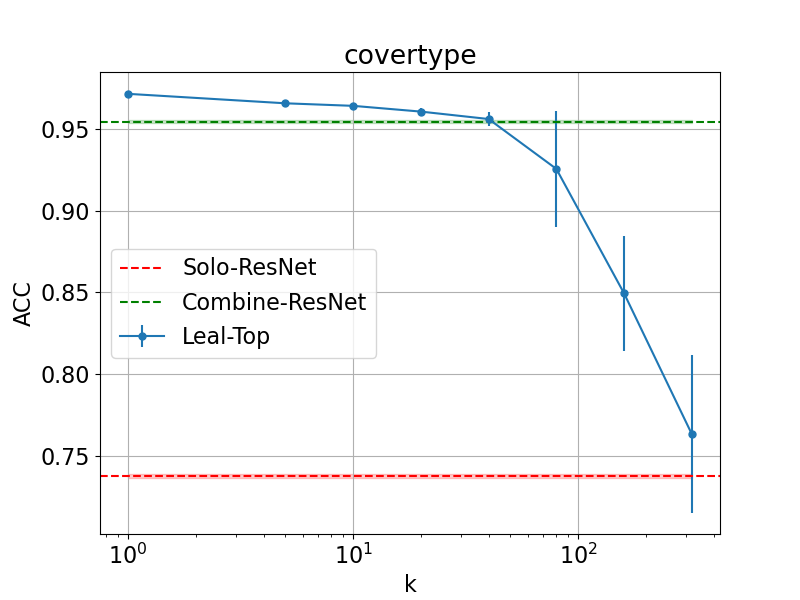}
    \includegraphics[width=0.49\linewidth]{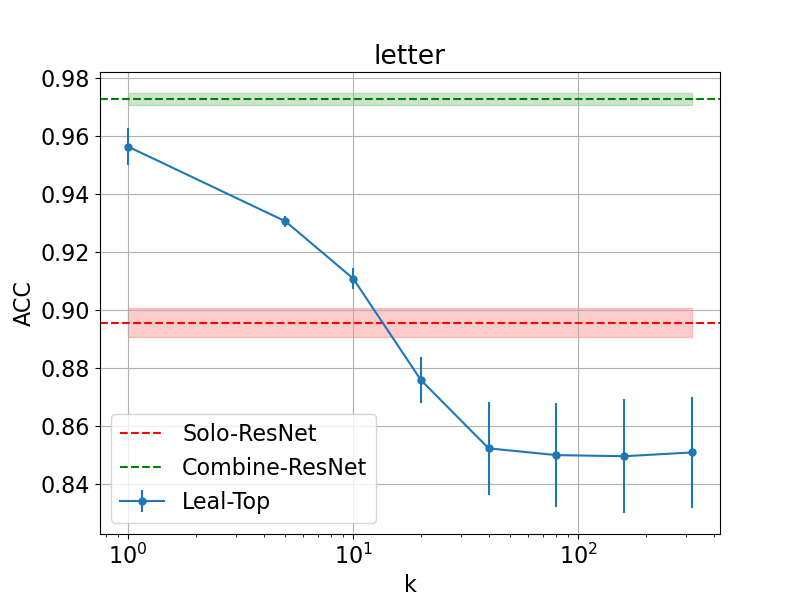}
    \caption{Capacity of attention mechanism in soft alignment}
    \label{fig:lealtop}
\end{figure}

\paragraph{Effect of Hyperparameters.}
We conduct ablation studies to examine the impact of the number of clusters ($C$) and the number of neighbors ($K$) on Leal's performance. The results are illustrated in Figure~\ref{fig:ablation-C} and Figure~\ref{fig:ablation-K}, respectively. From these results, we derive three key findings: First, setting $K$ too large typically has a negative effect, consistent with the observations in Figure~\ref{fig:lealtop}. Second, the effect of $C$ is dataset-size-dependent. For instance, the large \texttt{bike} dataset benefits from a larger $C$ value, whereas for the small \texttt{hepatitis} dataset, a small $C$ value is sufficient. Third, the effect of $C$ also interacts with the number of neighbors ($K$). For example, in the \texttt{hepatitis} dataset (Figure~\ref{fig:ablation-C}), when $K$ is small, a larger $C$ value is advantageous; however, when $K$ is large, a smaller $C$ value proves to be more effective. These findings suggest that soft alignment and cluster sampling can complement each other during the training.

\begin{figure}
  \centering
  \includegraphics[width=0.48\linewidth]{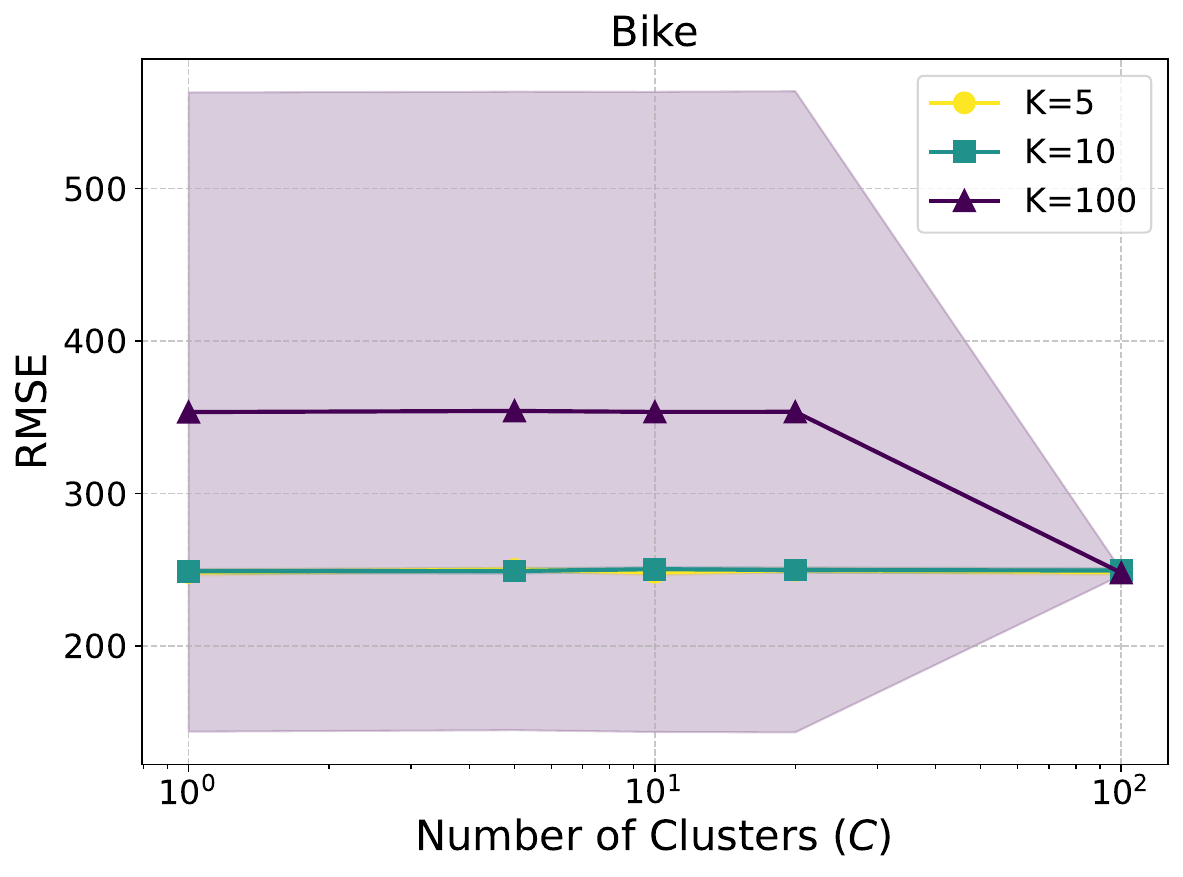}
  \includegraphics[width=0.48\linewidth]{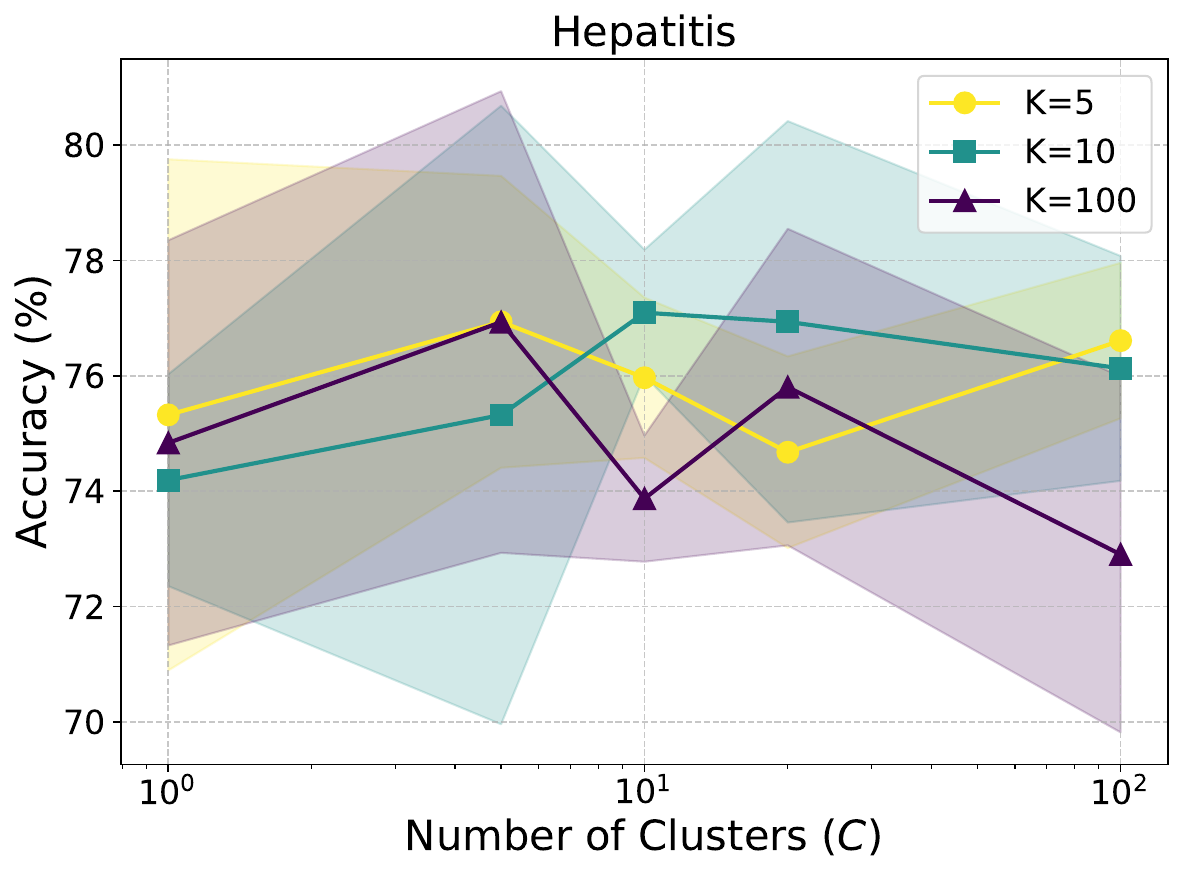}
  \caption{Effect of cluster size $C$ on performance}
  \label{fig:ablation-C}
\end{figure}

\begin{figure}
  \centering
  \includegraphics[width=0.48\linewidth]{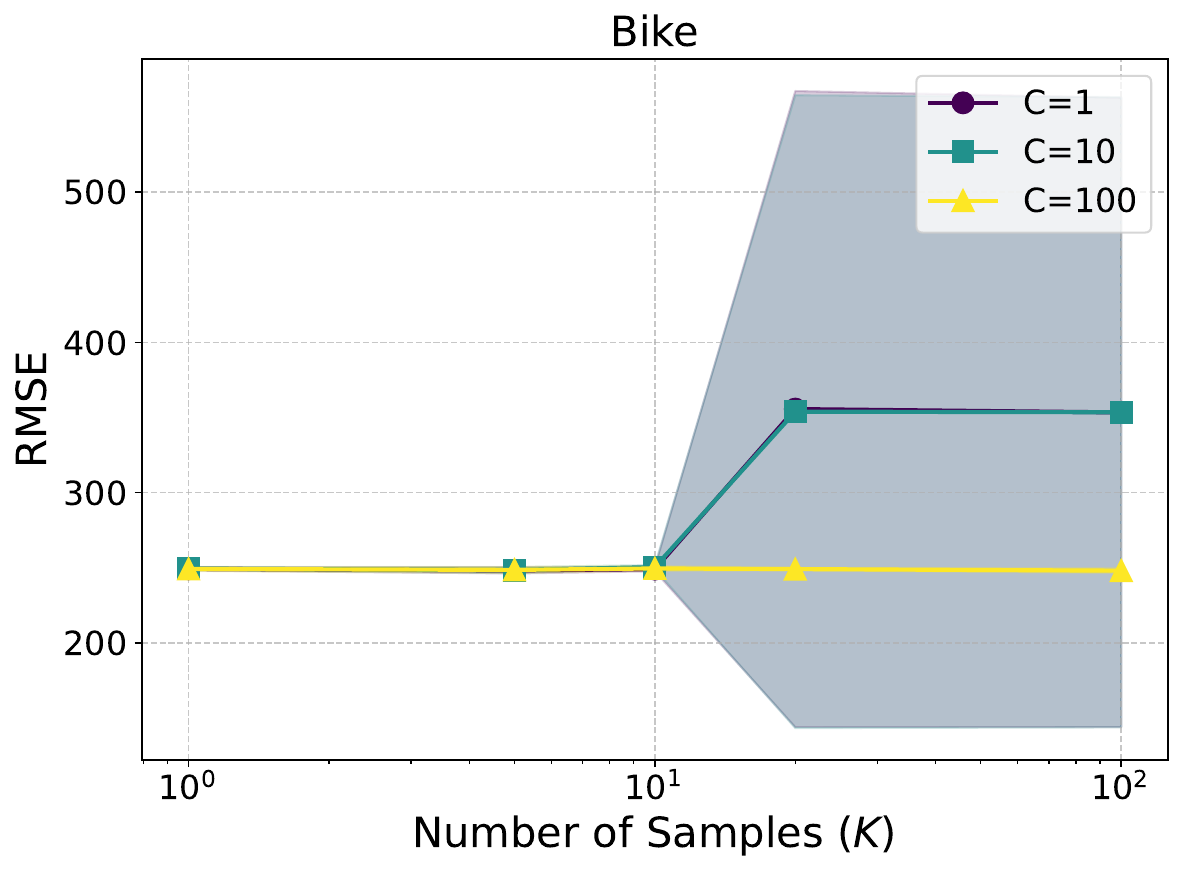}
  \includegraphics[width=0.48\linewidth]{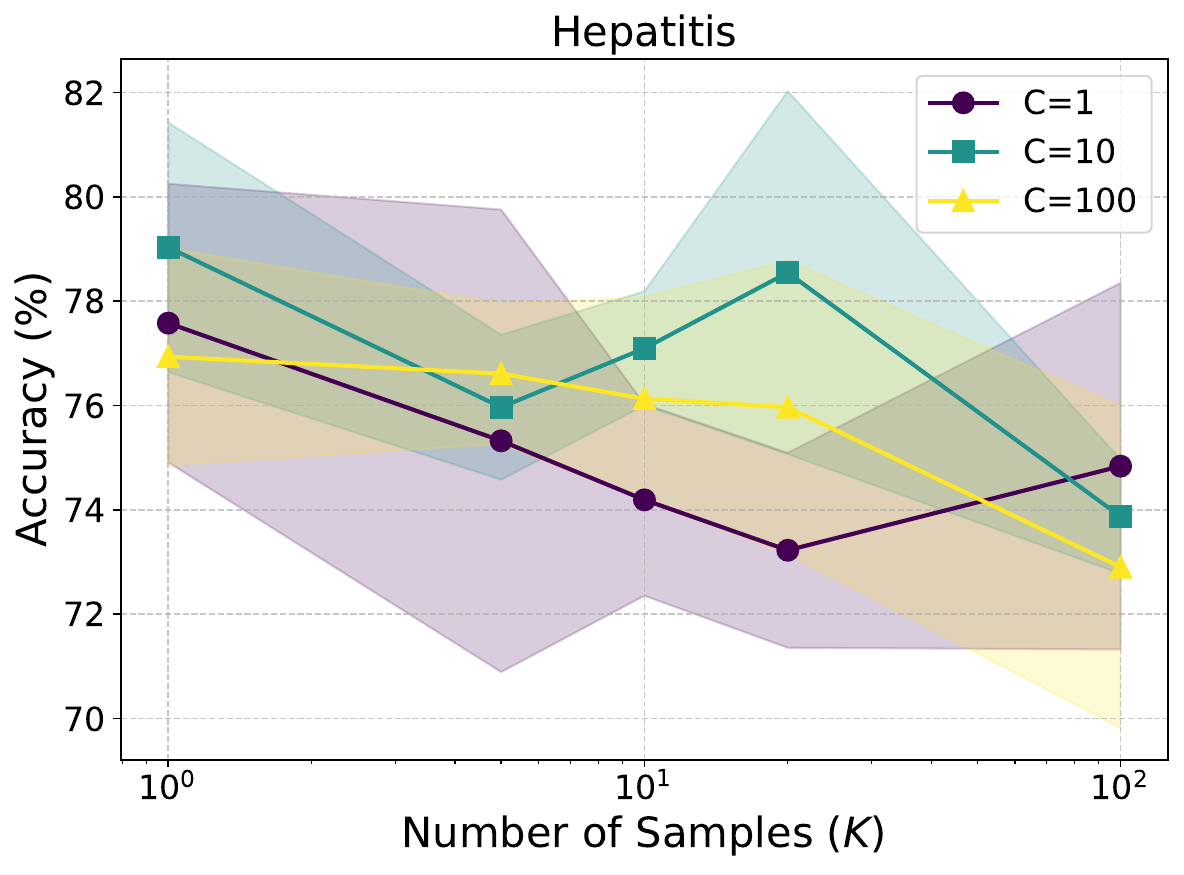}
  \caption{Effect of number of neighbors $K$ on performance}
  \label{fig:ablation-K}
\end{figure}

\section{Future Work}\label{sec:future-work}
In the future, we aim to extend the model to handle multiple relational tables. The primary challenges are overfitting and efficiency due to the involvement of numerous tables. Successfully addressing these challenges would enable knowledge fusion across real-world tabular data without shared features, supporting more applications in unsupervised multi-modal learning and vertical federated learning.

Additionally, we aim to develop efficient methods for identifying relationships between tables. While training a Leal model to evaluate table relationships by performance is feasible, it is computationally expensive and lacks scalability. Exploring efficient approaches could significantly enhance knowledge integration across tables.

\section{Conclusion}\label{sec:conclusion}
In this paper, we theoretically and empirically demonstrate that properly aligned data are more effectively learned by machine learning models compared to misaligned data. Building on this observation, we propose an integrated model, \emph{Leal}, which seamlessly combines soft alignment and learning for tabular data. We conduct extensive experiments on both synthetic and real-world datasets to validate the effectiveness of Leal. Looking ahead, we believe that latent alignment learning represents a promising direction for advancing machine learning to leverage the widely available heterogeneous tabular data without shared features.

\section*{Impact Statement}

This paper presents work whose goal is to advance the field of 
Machine Learning. There are many potential societal consequences 
of our work, none which we feel must be specifically highlighted here.

\bibliography{references}
\bibliographystyle{icml2025}

\newpage
\appendix
\onecolumn
\section{Proof}\label{sec:proof}
\subsection{Error Gap Between Aligned and Misaligned Data}\label{subsec:proof-align-misalign}

\thmalignment*

\begin{proof}

For the aligned case, we can derive the mean squared error (MSE) as follows:
\begin{equation}\label{eq:mse_aligned}
    \mathrm{MSE}_\mathrm{aligned} = \inf_{\boldsymbol{\alpha} \in R^{m^P}, \boldsymbol{\beta} \in R^{m^S}} \|\mathbf{y} - \mathbf{X}^P \boldsymbol{\alpha} - \mathbf{X}^S \boldsymbol{\beta}\|
\end{equation}
The ordinary least squares (OLS) estimator of $\boldsymbol{\alpha}$ is given by:
\begin{equation}
    \hat{\boldsymbol{\alpha}} := (\mathbf{X}^{P \top} \mathbf{X}^P)^{-1} \mathbf{X}^P (\mathbf{y} - \mathbb{E}[\mathbf{R}] \mathbf{X}^S \boldsymbol{\beta}) 
\end{equation}
For a permutation matrix $\mathbf{R}$ under uniform distribution, we have $\mathbb{E}[\mathbf{R}] = \frac{1}{n}\mathds{1}^\top \mathds{1}$. Therefore:
\begin{equation}\label{eq:alpha_hat}
    \hat{\boldsymbol{\alpha}} = (\mathbf{X}^{P \top} \mathbf{X}^P)^{-1} \mathbf{X}^P (\mathbf{y} - \frac{1}{n} \mathds{1}^\top \mathds{1} \mathbf{X}^S \boldsymbol{\beta}) 
\end{equation}
The MSE for the misaligned case can be expressed as:
\begin{align}
    \mathrm{MSE}_{\mathrm{misaligned}} 
    & = \inf_{\boldsymbol{\beta}} \inf_{\boldsymbol{\alpha}} \mathbb{E}_\mathbf{R} \|\mathbf{y} - \mathbf{X}^P \boldsymbol{\alpha} - \mathbf{R} \mathbf{X}^S \boldsymbol{\beta}\|_2^2 \\
    & = \inf_{\boldsymbol{\beta}} \mathbb{E}_\mathbf{R} \|\mathbf{y} - \mathbf{X}^P \hat{\boldsymbol{\alpha}} - \mathbf{R} \mathbf{X}^S \boldsymbol{\beta}\|_2^2 \\
\end{align}
Substituting $\hat{\boldsymbol{\alpha}}$ from equation~\ref{eq:alpha_hat}, we obtain:
\begin{align}
    \mathrm{MSE}_{\mathrm{misaligned}} 
    & = \inf_{\boldsymbol{\beta}} \mathbb{E}_\mathbf{R} \left\|\mathbf{y} - \mathbf{X}^P (\mathbf{X}^{P \top} \mathbf{X}^P)^{-1} (\mathbf{X}^P \mathbf{y} - \mathbf{X}^P \frac{1}{n} 1^\top 1 \mathbf{X}^S \boldsymbol{\beta}) - \mathbf{R} \mathbf{X}^S \boldsymbol{\beta}\right\|_2^2 \\
    & = \inf_{\boldsymbol{\beta}} \mathbb{E}_\mathbf{R} \left\| (\mathbf{I} - \mathbf{X}^P (\mathbf{X}^{P \top} \mathbf{X}^P)^{-1} \mathbf{X}^P)\mathbf{y} + (\mathbf{X}^P (\mathbf{X}^{P \top} \mathbf{X}^P)^{-1} \mathbf{X}^P \frac{1}{n} \mathds{1}^\top \mathds{1} \mathbf{X}^S \boldsymbol{\beta}) - \mathbf{R} \mathbf{X}^S \boldsymbol{\beta}\right\|_2^2 
\end{align}
Since $\mathbf{X}^P (\mathbf{X}^{P \top} \mathbf{X}^P)^{-1} \mathbf{X}^P$ is a projection matrix that projects any vector onto the column space of $\mathbf{X}^P$, and $\mathbf{X}^S \boldsymbol{\beta}$ is orthogonal to the column space of $\mathbf{X}^P$, the term $\mathbf{X}^P (\mathbf{X}^{P \top} \mathbf{X}^P)^{-1} \mathbf{X}^P \frac{1}{n} \mathds{1}^\top \mathds{1} \mathbf{X}^S \boldsymbol{\beta} = 0$. Thus:
\begin{align}
    \mathrm{MSE}_{\mathrm{misaligned}}
    & = \inf_{\boldsymbol{\beta}} \mathbb{E}_\mathbf{R} \left\| (\mathbf{I} - \mathbf{X}^P (\mathbf{X}^{P \top} \mathbf{X}^P)^{-1} \mathbf{X}^P)\mathbf{y} - \mathbf{R} \mathbf{X}^S \boldsymbol{\beta}\right\|_2^2 \\
    & = \inf_{\boldsymbol{\beta}} \mathbb{E}_\mathbf{R} \left[\left\|\mathbf{R} \mathbf{X}^S \boldsymbol{\beta}\right\|_2^2 - 2\left[(\mathbf{I} - \mathbf{X}^P (\mathbf{X}^{P \top} \mathbf{X}^P)^{-1} \mathbf{X}^P)\mathbf{y}\right]^\top \mathbf{R} \mathbf{X}^S \boldsymbol{\beta} + \left\|(\mathbf{I} - \mathbf{X}^P (\mathbf{X}^{P \top} \mathbf{X}^P)^{-1} \mathbf{X}^P)\mathbf{y}\right\|_2^2\right]
\end{align}
By properties of permutation matrices:
\begin{equation}
    \mathbb{E}_\mathbf{R}\| \mathbf{R} \mathbf{X}^S \boldsymbol{\beta}\|_2^2 = \|\mathbf{X}^S \boldsymbol{\beta}\|_2^2; \; \mathbb{E}_\mathbf{R} [\mathbf{R}]= \frac{1}{n}\mathds{1}^\top \mathds{1}
\end{equation}
Therefore:
\begin{align}
    \mathrm{MSE}_{\mathrm{misaligned}}
    & = \inf_{\boldsymbol{\beta}} \left[\left\|\mathbf{X}^S \boldsymbol{\beta}\right\|_2^2 - 2\left[(\mathbf{I} - \mathbf{X}^P (\mathbf{X}^{P \top} \mathbf{X}^P)^{-1} \mathbf{X}^P)\mathbf{y}\right]^\top \frac{1}{n}\mathds{1}^\top \mathds{1} \mathbf{X}^S \boldsymbol{\beta} + \left\|(\mathbf{I} - \mathbf{X}^P (\mathbf{X}^{P \top} \mathbf{X}^P)^{-1} \mathbf{X}^P)\mathbf{y}\right\|_2^2\right]
\end{align}
Since $\mathbf{I} - \mathbf{X}^P (\mathbf{X}^{P \top} \mathbf{X}^P)^{-1} \mathbf{X}^P$ projects any vector onto the orthogonal complement of the column space of $\mathbf{X}^P$, the term $\left[(\mathbf{I} - \mathbf{X}^P (\mathbf{X}^{P \top} \mathbf{X}^P)^{-1} \mathbf{X}^P)\mathbf{y}\right]^\top \frac{1}{n}\mathds{1}^\top \mathds{1} \mathbf{X}^S \boldsymbol{\beta} = 0$. Hence:
\begin{align}
    \mathrm{MSE}_{\mathrm{misaligned}}
    & = \inf_{\boldsymbol{\beta}} \left[\left\|\mathbf{X}^S \boldsymbol{\beta}\right\|_2^2 + \left\|(\mathbf{I} - \mathbf{X}^P (\mathbf{X}^{P \top} \mathbf{X}^P)^{-1} \mathbf{X}^P)\mathbf{y}\right\|_2^2\right] \\
    & = \inf_{\boldsymbol{\beta}} \left\|\mathbf{X}^S \boldsymbol{\beta}\right\|_2^2 + \left\|(\mathbf{I} - \mathbf{X}^P (\mathbf{X}^{P \top} \mathbf{X}^P)^{-1} \mathbf{X}^P)\mathbf{y}\right\|_2^2 \\
\end{align}
The minimum is attained at $\boldsymbol{\beta} = \mathbf{0}$, yielding:
\begin{align}
    \mathrm{MSE}_{\mathrm{misaligned}}
    & = \left\|(\mathbf{I} - \mathbf{X}^P (\mathbf{X}^{P \top} \mathbf{X}^P)^{-1} \mathbf{X}^P)\mathbf{y}\right\|_2^2 \\
    & = \inf_{\boldsymbol{\alpha} \in \mathbb{R}^{m^P}, \boldsymbol{\beta} = \mathbf{0}} \left\|\mathbf{y} - \mathbf{X}^P \boldsymbol{\alpha} - \mathbf{X}^S \boldsymbol{\beta}\right\|_2^2 \\
\end{align}
Comparing with Equation~\ref{eq:mse_aligned}, we conclude:
\begin{equation}
    \mathrm{MSE}_{\mathrm{misaligned}} \geq \inf_{\boldsymbol{\alpha} \in \mathbb{R}^{m^P}, \boldsymbol{\beta} \in \mathbb{R}^{m^S}} \left\|\mathbf{y} - \mathbf{X}^P \boldsymbol{\alpha} - \mathbf{X}^S \boldsymbol{\beta}\right\|_2^2 = \mathrm{MSE}_{\mathrm{aligned}}
\end{equation}
\end{proof}

\subsection{Approximation Capacity of Cluster Sampler}\label{subsec:proof-cluster-sampler}

\begin{definition}[Definition of optimal cluster sampler]
    Assume the inputs are uniformly bounded by some constant $B$. 
    The optimal cluster sampler is defined by the uniform equi-continuous cluster sampler function which achieves the minimal optimization loss for the prediction task in \cref{fig:leal-framework}.
    \begin{equation}
        \textrm{Optimal cluster sampler} := \arginf_{\textrm{Uniform equi-continuous cluster sampler}} \textrm{Loss}(\textrm{cluster sampler})
    \end{equation}
    The cluster sampler is defined over bounded inputs ($|X^P|_{\infty} \leq B, |X^S|_{\infty} \leq B$) from $\mathbb{R}^{m^P} \times \mathbb{R}^{n^S \times m^S}$ and output in $\mathbb{R}^{n^S}$.
\end{definition}

\begin{remark}
    The existence of such optimal cluster sampler is guaranteed by the boundedness and uniform equi-continuity of the set of cluster sampler functions. 
\end{remark}

\thmclustersampler*

\begin{proof}
    We just need to prove the statement for small $\epsilon \leq 6$.

    The input of cluster sampler is $1 \times m^P$ and output is $n^S \times m^S$, the final prediction is to generate a sample probabilities:
    \begin{equation}
        (n^S * m^S, 1 * m^P) \to (n^S * d, 1 * C) \to (n^S * C, 1 * C) \to n^S * 1. 
    \end{equation}

    Also, since there is no weight depends on dimension $n_2$, we can reduce the approximation statement to that there exists trainable weight such that the continuous function $h$ can be approximated:
    \begin{equation}
        (1 * m^S, 1 * m^P) \to (n^S * d, 1 * C) \to (n^S * C, 1 * C) \to 1 * 1. 
    \end{equation}

    Notice that the layer operation of secondary embedding and trainable centroids weights $(C \times d)$ is continuous and the pretrained encoder as a neural network (which is a universal approximator) can approximates any continuous function $f$ composited with inverse embedding. 
    For simplicity, we will consider $m^P = m^S = 1$. 
    For any continuous function $h(p, s) \in [0, 1]$,
    we just need to show there exists trainable weight $\theta_1$, $\theta_2$ such that 
    \begin{equation}
        f(p; \theta_1) \odot g(s; \theta_2) = \sum_{i=1}^C f_i(p; \theta_1) \odot g_i(s; \theta_2). 
    \end{equation}
    Here $f(p; \theta_1) \in \mathbb{R}^C$ is a function of $p$ parameterized by $\theta_1$ and $g(s; \theta_1) \in \mathbb{R}^C$ is a function of $s$ parameterized by $\theta_2$.  
    As any continuous function $f(p, s)$ has a corresponding Taylor series expansion, it means for any $\epsilon > 0$, there exists $C$ which depends on error $\epsilon$ such that
    \begin{equation}
        \sup_p \sup_s |h(p, s) -\sum_{i=1}^C pol_{1,i}(p) pol_{2,i}(s)| \leq \frac{\epsilon}{2}. 
    \end{equation}
    Furthermore, as polynomial functions are continuous function, therefore $f_i$ can be used to approximate the polynomial function $pol_{1, i}$ and $g$ can be used to approximate the polynomial function $pol_{2, i}$.
    \begin{align}
        \sup_p |pol_{1,i}(p) - f_i(p; \theta_1)| & \leq \frac{\epsilon}{6B} \\ 
        \sup_s |pol_{2,i}(s) - g_i(s; \theta_2)| & \leq \frac{\epsilon}{6B}. 
    \end{align}
    Here $B := \max(1, \sup_p \max_{i} |pol_{1, i}(p)|, \sup_s \max_{i} |pol_{2, i}(s)|).$ 
    We show that the cluster sampler is capable to approximate any desirable continuous cluster sampler. 
    \begin{equation}
        \sup_p \sup_s |h(p, s) -\sum_{i=1}^C f_i(p; \theta_1) g_i(s; \theta_2)| \leq \frac{\epsilon}{2} + \frac{\epsilon}{6B} * B + \frac{\epsilon}{6B} (B + \frac{\epsilon}{6B}) = \frac{5}{6} \epsilon + \frac{\epsilon^2}{36B^2} < \epsilon. 
    \end{equation}
    The last inequality comes from $\epsilon < 6$. 
    The universal approximation capacity of the cluster sampler is proved. 
\end{proof}

\begin{remark}
    Since we are working with a cluster sampler with specific manually designed structure, it mainly comes from the fact the student's t-kernel introduce a suitable implicit bias to more efficiently learn the cluster sample probability $(n_2 \times 1)$. 
\end{remark}

\end{document}